\title{On Orderings of Probability Vectors and Unsupervised Performance Estimation}
\author{
    Muhammad Maaz$^{1,2}$
    \and
    Rui Qiao$^1$
    \and
    Yiheng Zhou$^1$
    \And
    Renxian Zhang$^1$ \\
    \affiliations
    $^1$Amazon Science \\
    $^2$University of Toronto \\
    \emails 
    m.maaz@mail.utoronto.ca
}
\newtheorem{theorem}{Theorem}
\newtheorem{lemma}{Lemma}
\theoremstyle{definition}
\newtheorem{definition}{Definition}
\theoremstyle{remark}
\newtheorem*{remark}{Remark}
\newcommand{\R}{\mathbb{R}}
\newcommand{\E}{\mathbb{E}}
\newcommand{\I}{\mathbb{I}}
\newcommand{\Prob}{\mathbb{P}}
\newcommand{\norm}[2]{\|#1\|_#2}
\DeclareMathOperator{\range}{ran}
\DeclareMathOperator*{\argmin}{arg\,min}
\begin{document}

\maketitle

\begin{abstract}
	Unsupervised performance estimation, or evaluating how well models perform on unlabeled data is a difficult task. Recently, a method was proposed by \citet{garg2022} which performs much better than previous methods. Their method relies on having a score function, satisfying certain properties, to map probability vectors outputted by the classifier to the reals, but it is an open problem which score function is best. We explore this problem by first showing that their method fundamentally relies on the ordering induced by this score function. Thus, under monotone transformations of score functions, their method yields the same estimate. Next, we show that in the binary classification setting, nearly all common score functions - the $L^\infty$ norm; the $L^2$ norm; negative entropy; and the $L^2$, $L^1$, and Jensen-Shannon distances to the uniform vector - all induce the same ordering over probability vectors. However, this does not hold for higher dimensional settings. We conduct numerous experiments on well-known NLP data sets and rigorously explore the performance of different score functions. We conclude that the $L^\infty$ norm is the most appropriate.\footnote{Code available at: \url{https://github.com/mmaaz-git/acc-estim}.}
\end{abstract}

\section{Introduction}

When a machine learning model is trained, typically (in the supervised or semi-supervised cases), we have labels on the data and can easily calculate any number of popular metrics to evaluate the model's performance on some held-out test set, e.g., accuracy, true positive rate, precision, etc. Furthermore, performance on a labeled validation set can be used for model tuning. When labels are present, evaluating models is extremely straightforward.

However, in many real-world applications of machine learning, not all data is unlabeled, and in fact the amount of unlabeled data may vastly outnumber the amount of labeled data. For example, a chatbot may be trained on some number of queries that have been labeled by humans, but all possible queries cannot possibly be labeled. In many cases, human labeling of all data is too expensive, either costing too much time or too much money. However, for the deployer, knowing how well your model is performing on this unlabeled data is of prime importance, especially in the context of model performance degradation when faced with real-world data, which is often noisier and from a different distribution than the original data set.

More formally, we have a source domain on which the model is trained and a target domain on which it is deployed. This problem is referred to in the literature as unsupervised accuracy estimation, or more generally unsupervised performance estimation. Here, unsupervised refers to the fact that the target domain has no labels, and thus other information has to be used in order to evaluate the model's performance. Unsupervised performance estimation is related to the well-known problem of unsupervised domain adaptation, which seeks to appropriately deploy a model on to an unlabeled target domain, instead of explicitly estimating its performance. However, the two problems are quite related in spirit.

A recent contribution of \citet{garg2022} proposes a simple method for unsupervised performance estimation, called average thresholded confidence (ATC), which leverages the class probabilities outputted by a classifier. While their method outperforms previous baselines, several questions about the method are left open. In this paper, we provide several theoretical extensions of their work, which may be of independent interest, and expand our claims by testing their method on natural language processing (NLP) data sets.

\subsection{Related Literature}

Arguably, the theory of unsupervised performance estimation began with the seminal contribution of \citet{ben2010theory, ben2010impossibility}, which first showed that the error on a target domain can be bounded by a function of the error on the source domain and a measure of discrepancy between the source and target domains. In \citet{ben2010theory}, the $\mathcal{H}$-divergence is introduced as this measure of discrepancy. Later works explored other notions of distance and derived similar bounds.

Based on this notion, many methods of estimating accuracy have been proposed which essentially seek to align the source and target domains in a common feature space, and thus calculate the discrepancy between them. \citet{guillory2021predicting}, \citet{deng2021labels}, and \citet{deng2021does} use this notion, by finding a linear relationship between discrepancy and performance drop via a validation set. To quantify this discrepancy, \citet{deng2021labels} uses the Fr\'echet distance between feature vectors, while \citet{guillory2021predicting} uses differences in classifier probabilities.

\citet{jiang2022assessing} estimates error as the expected disagreement between two separately trained models, while \citet{chen2021mandoline} uses importance reweighting. It is again worth noting that both of these methods come from the domain adaptation literature. The notable work of \citet{platanios2016estimating} introduced a Bayesian approach which uses a graphical model to estimate the error.

These methods make some assumptions which may not necessarily hold. For example, \citet{guillory2021predicting} assumes a linear relationship between discrepancy and performance. While we may model a non-linear relationship as seen in \citet{deng2021labels}, if linearity does not hold, there still remains the issue that calculating discrepancy between data is fraught with complexities, especially in language data, where finding an appropriate feature space is difficult due to well-known issues of sparsity and loss of semantics with common embedding methods. It is perhaps for this reason that generative adversarial networks (GANs) are the most common method of domain adaptation for NLP models \citep{ramponi2020neural}. As well, the method of \citet{jiang2022assessing} doubles computational overhead by training two separate models.

Recently, \citet{garg2022} proposed the ATC method, which will be the focus of this paper. In their experiments, they outperformed the methods of \citet{guillory2021predicting}, \citet{deng2021labels}, \citet{jiang2022assessing}, and \citet{chen2021mandoline}.

\textbf{Our Contribution.} 
\citet{garg2022}'s method is parameterized by the choice of a certain \textit{score function}, whose properties we will discuss below. While they perform several experiments comparing two possible score functions, they still leave open some questions about their method. In this paper, we propose several appropriate score functions from the literature. We develop a theoretical basis for analyzing the estimations yielded by these different score functions. Based on this, we show that in the binary classification setting, (almost all) possible score functions yield the same estimate. Lastly, we perform multiple experiments in multiclass settings, showing with repeated experiments that the differences between these functions are small.

\section{Theory}

\subsection{Background}

Our setting is of a $k$-class classifier. To adopt the notation of \citet{garg2022}, we have an input domain $\mathcal{X} \subseteq \R^d$ and a label space $\mathcal{Y}$ containing $k$ distinct labels (e.g., for a binary classification problem, $\mathcal{Y}=\{0,1\}$). Denote $\mathcal{D}^s$ and $\mathcal{D}^t$ to be the source and target domains over $\mathcal{X} \times \mathcal{Y}$. We assume that we have labels in the source domain, but no labels in the target domain - i.e., it is an unsupervised domain adaptation problem. We have some performance metric $\Gamma^s$ and $\Gamma^t$ on the source and target domains, where they must be in the interval $[0,1]$. In \citet{garg2022}, they particularly take $\Gamma$ to be the classification error, but we will keep it more general for now. Let $\mathcal{F}$ be a set of hypotheses mapping $\mathcal{X} \mapsto \Delta^{k-1} \subset \R^k$, i.e., the input space to the $(k-1)$-dimensional probability simplex. Our goal is, given $f \in \mathcal{F}$, validation data from $D^s$, a source performance metric $\Gamma^s$, and unlabeled data from $D^t$, estimate $\Gamma^t$.

The method introduced by \citet{garg2022} depends on having a score function $s : \Delta^{k-1} \mapsto \R$ which takes the softmax output of $f \in \mathcal{F}$ and (roughly) gives a higher value if the classifier is more confident in its prediction. It should be minimized at the centroid of $\Delta^{k-1}$ (the uniform probability vector) and maximized on the vertices of $\Delta^{k-1}$, which corresponds to a single element of the vector being $1$ and all other elements being $0$. Two such functions studied by \citet{garg2022} are the maximum confidence $s(f(x)) = \max_{j \in \mathcal{Y}} f_j(x)$ and negative entropy $s(f(x)) = \sum_j f_j(x) \log(f_j(x))$. We study several such functions.

\citet{garg2022}'s method is as follows. First, learn a threshold $t$ such that:

\begin{equation} \label{source_metric}
    \Prob_{x \in D^s}[s(f(x))<t] = \Gamma^s
\end{equation}

Note that $\Prob[\bullet]$ is equivalent to $\E[\I[\bullet]]$. So we learn a threshold such that the proportion of points in $D^s$ that have a score less than $t$ is equal to our source performance metric. In practice, we do:

\begin{equation}
    t = \argmin_{t \in \range{s(f(x))}, \ x \in D^s} |\Gamma^s - \Prob_{x \in D^s}[s(f(x))<t]|
\end{equation}

where $\range$ is the range, i.e., all possible values of $s$ over $D^s$.

This means we calculate the threshold $t$ by minimizing the absolute difference between the two sides of Equation \ref{source_metric}, letting $t$ take on the values of the range of the score function outputted for the values in $D^s$.

Then we estimate $\Gamma^t$ as follows: 
\begin{equation}
    \Gamma^t = \Prob_{x \in D^t}[s(f(x))<t]
\end{equation}

\subsection{Orderings}

From the method identified above, we can see that the probability vectors are first mapped to a real number, and then ordered by the usual ordering of the reals. The threshold $t$ depends exactly on this ordering. Thus, the way the score function maps vectors to the reals determines the threshold. In particular, if for the same set of vectors we have two different score functions, then the estimate $\Gamma^t$ should be the same (albeit the actual threshold may differ). First, we introduce some terminology.

\begin{definition}[Order-isomorphism]
    Two score functions $s,\hat{s} : \Delta^{k-1} \mapsto \R$ are said to be \emph{order-isomorphic} if, for all $p,q \in \Delta^{k-1}$:
    \begin{itemize}
        \item $s(p)<s(q) \iff \hat{s}(p)<\hat{s}(q)$
        \item $s(p)=s(q) \iff \hat{s}(p)=\hat{s}(q)$
        \item $s(p)>s(q) \iff \hat{s}(p)>\hat{s}(q)$
    \end{itemize}
\end{definition}

The definition above, from order theory\footnote{Technically, an order-isomorphism is a bijective function defined between two posets that preserves and reflects orderings. This is slightly different from our setting, where we have a single set without an ordering ($\Delta^{k-1}$) and define two separate orderings on it.}. This captures the idea that if two different score functions do not change the ranking of a set of vectors, then they can be considered to be basically the same. Note that the third point in the definition is implied by the first two.

We now introduce our lemma.

\begin{lemma}[Ordering Lemma] \label{orderinglemma}
    If two score functions $s,\hat{s}$ are order-isomorphic then, for a fixed $f$, $D^s$, $D^t$, and $\Gamma^s$, we have that $\Gamma^t = \hat{\Gamma^t}$, where $\Gamma^t$ and $\hat{\Gamma^t}$ are the target performance estimates given by $s$ and $\hat{s}$ respectively.
\end{lemma}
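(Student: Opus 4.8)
The plan is to exploit that the ATC procedure interacts with the score function only through the \emph{ordering} it induces on the finitely many vectors $\{f(x):x\in D^s\cup D^t\}$: the threshold is selected from $\range_{D^s}s$, only the rank of a candidate threshold among the source scores enters the objective, and the final estimate merely counts how many target scores fall strictly below the chosen threshold. Hence it suffices to show that order-isomorphic $s,\hat s$ produce the same rankings on $D^s$ and on $D^t$ and that the two runs pick ``corresponding'' thresholds. Equivalently --- matching the ``monotone transformation'' language used informally above --- I would first observe that order-isomorphism is exactly the statement that $\hat s=g\circ s$ on $\Delta^{k-1}$ for some strictly increasing bijection $g:\range s\to\range\hat s$, and then check that each step of the algorithm commutes with $g$.

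Carrying this out: (i) define $g$ by $g(s(p)):=\hat s(p)$; well-definedness and injectivity come from the ``$=$'' clause of the definition and strict monotonicity from the ``$<$'' clause, so $\hat s(f(x))=g(s(f(x)))$ for all $x$ and $g$ restricts to an order-preserving bijection $\range_{D^s}s\to\range_{D^s}\hat s$. (ii) Since the $\argmin$ runs over $t\in\range_{D^s}s$, write any candidate as $t=s(f(x_0))$ with $x_0\in D^s$; by order-isomorphism applied to the pair $f(x),f(x_0)\in\Delta^{k-1}$ the event $s(f(x))<s(f(x_0))$ is equivalent to $\hat s(f(x))<\hat s(f(x_0))$, so
\[
\Prob_{x\in D^s}[s(f(x))<s(f(x_0))]=\Prob_{x\in D^s}[\hat s(f(x))<\hat s(f(x_0))]\quad\text{for every }x_0\in D^s .
\]
Thus the objective $\bigl|\Gamma^s-\Prob_{x\in D^s}[s(f(x))<t]\bigr|$, reparametrized through $x_0$, coincides with the one for $\hat s$ reparametrized through $\hat t=g(t)$; its minimum value is therefore the same, and the two minimizer sets are matched by $g$, yielding chosen thresholds $t^\ast$ and $\hat t^\ast=g(t^\ast)$. (iii) Apply the same equivalence on $D^t$:
\[
\Gamma^t=\Prob_{x\in D^t}[s(f(x))<t^\ast]=\Prob_{x\in D^t}[\hat s(f(x))<\hat t^\ast]=\hat\Gamma^t .
\]

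The step I expect to be the real obstacle is handling non-uniqueness of the $\argmin$: several candidate thresholds may approximate $\Gamma^s$ equally well, and two of them can partition $D^s$ identically yet partition $D^t$ differently, so absent a tie-breaking convention ``$\Gamma^t$'' is itself ambiguous. The resolution is light-weight: the source objective, pulled back through $g$, is literally the \emph{same} function on $\range_{D^s}s$ as the corresponding objective on $\range_{D^s}\hat s$, so $g$ carries the minimizer set for $s$ bijectively and order-preservingly onto that for $\hat s$; hence any fixed selection rule (smallest admissible threshold, first in a fixed enumeration of $D^s$, $\ldots$) picks thresholds $t^\ast$ and $\hat t^\ast=g(t^\ast)$ that are $g$-corresponding, after which step (iii) closes the argument. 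I would state this consistency requirement explicitly (it is vacuous when the minimizing threshold is unique). Everything else is routine bookkeeping.
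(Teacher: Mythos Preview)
Your proposal is correct and follows essentially the same route as the paper: both arguments observe that order-isomorphism forces the two thresholds to be the images of the \emph{same} source probability vector (your $g(t^\ast)=\hat t^\ast$ is exactly the paper's ``there exists $p$ with $t=s(p)$ and $\hat t=\hat s(p)$''), and then transfer the strict inequality to $D^t$. You are more careful than the paper in one respect---you explicitly construct the monotone bijection $g$ and flag the tie-breaking issue when the $\argmin$ is non-unique, whereas the paper silently assumes a unique minimizer---but the underlying idea is identical.
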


\begin{proof}
    Denote $t$ and $\hat{t}$ the threshold obtained by $s$ and $\hat{s}$ respectively. Recall that $t$ and $\hat{t}$ take on the possible values of the range of their respective score functions, calculated for  data in $D^s$. Of course, they are possibly different. However, there exists a probability vector $p \in \range f(x)$ such that both $t=s(p)$ and $\hat{t}=\hat{s}(p)$. To see why this is, observe that the set $q \in \range f(x)$ such that $s(q)<s(p)$ is exactly the same as $\hat{s}(q)<\hat{s}(q)$ due to order-isomorphism. Thus, while $t$ and $\hat{t}$ are possibly different, they must come from the same vector $p$.
    
    Next, for $x \in D^t$, due to order-isomorphism, $s(f(x))<t=s(p)$ if and only if $\hat{s}(f(x))<\hat{t}=\hat{s}(p)$. Thus, the sets $\{s(f(x))<t \mid x \in D^t\}$ and $\{\hat{s}(f(x))<\hat{t} \mid x \in D^t\}$ are the same. Thus $\Gamma^t = \hat{\Gamma^t}$.
\end{proof}

The Ordering Lemma allows us to explore a variety of score functions, and show that they yield the same estimate simply by showing that they are order-isomorphic. In the next section, we do just that. We conclude with two lemmas that will play a role in future results.

\begin{lemma} \label{equiv}
    The property of being order-isomorphic is an equivalence relation on the set of score functions.
\end{lemma}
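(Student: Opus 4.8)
The plan is to verify the three defining properties of an equivalence relation — reflexivity, symmetry, and transitivity — directly from the definition of order-isomorphism, since each of its clauses is stated as a biconditional and these compose cleanly.

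First I would handle reflexivity: for any score function $s$ and any $p,q \in \Delta^{k-1}$, the statements $s(p) < s(q) \iff s(p) < s(q)$ and $s(p) = s(q) \iff s(p) = s(q)$ are tautologies, so $s$ is order-isomorphic to itself. Next, symmetry is immediate because the definition is already phrased symmetrically in the two arguments: a biconditional $A \iff B$ is logically equivalent to $B \iff A$, so if $s$ is order-isomorphic to $\hat s$ then $\hat s$ is order-isomorphic to $s$, with the clauses simply read in the other direction. (As the remark after the definition notes, the $>$ clause is redundant given the other two, so it suffices to track the $<$ and $=$ clauses, but including it costs nothing.)

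The only part with any content is transitivity. Suppose $s$ is order-isomorphic to $\hat s$ and $\hat s$ is order-isomorphic to $\tilde s$. Fix $p,q \in \Delta^{k-1}$. Then $s(p) < s(q) \iff \hat s(p) < \hat s(q)$ by the first hypothesis, and $\hat s(p) < \hat s(q) \iff \tilde s(p) < \tilde s(q)$ by the second; chaining these biconditionals gives $s(p) < s(q) \iff \tilde s(p) < \tilde s(q)$. The identical argument with $=$ in place of $<$ gives $s(p) = s(q) \iff \tilde s(p) = \tilde s(q)$, and the $>$ clause then follows (or is obtained the same way). Hence $s$ is order-isomorphic to $\tilde s$, completing the verification.

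I expect no real obstacle here — the statement is essentially the observation that "agreeing on the induced order" is a property defined by a conjunction of biconditionals over all pairs, and such relations are automatically equivalence relations. If anything, the only thing to be careful about is to quantify over \emph{all} $p,q \in \Delta^{k-1}$ at each step and not just a fixed pair, but since every implication above holds for an arbitrary pair this is a non-issue.
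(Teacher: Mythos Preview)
Your proposal is correct and follows essentially the same approach as the paper: verify reflexivity, symmetry, and transitivity directly from the definition, with transitivity coming from chaining the biconditionals. Your write-up is simply a more detailed version of the paper's terse argument.
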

\begin{proof}
    A score function is clearly order-isomorphic with itself, and if $s$ is order-isomorphic with $\hat{s}$ then $\hat{s}$ is order-isomorphic with $s$. Lastly, if $s$ is order-isomorphic with $\hat{s}$ and $\hat{s}$ is order-isomorphic with $\hat{s}$, then $s$ is order-isomorphic with $\hat{s}$ by transitivity of the biconditional (see the definition of order-isomorphic). Hence, we have reflexivity, symmetry, and transitivity.
\end{proof}

In this paper, we use monotone in the order-theoretic sense, i.e., monotonically increasing, or order-preserving.

\begin{lemma} \label{monotransform}
    Consider two score functions $s$ and $\hat{s}$ such that $\hat{s} = g(s)$, where $g$ is strictly monotonic. Then $s$ and $\hat{s}$ are order-isomorphic.
\end{lemma}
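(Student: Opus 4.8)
The plan is to verify the three bullet points in the definition of order-isomorphism directly, leaning only on the defining property of a strictly monotonic real function. First I would fix arbitrary $p,q \in \Delta^{k-1}$ and abbreviate $a = s(p)$ and $b = s(q)$, so that $\hat{s}(p) = g(a)$ and $\hat{s}(q) = g(b)$. Since, by the convention stated just above the lemma, ``monotonic'' means monotonically increasing, $g$ being strictly increasing gives the equivalence $a < b \iff g(a) < g(b)$ for all reals $a,b$ (the forward direction is the definition of strictly increasing; the converse follows by contraposition, since $a \geq b$ would force $g(a) \geq g(b)$). Substituting back, this is exactly the first bullet: $s(p) < s(q) \iff \hat{s}(p) < \hat{s}(q)$.

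For the second bullet I would use that a strictly monotonic function is injective. The implication $a = b \implies g(a) = g(b)$ is trivial; conversely, if $a \neq b$ then either $a < b$ or $b < a$, and in either case strict monotonicity yields $g(a) \neq g(b)$, so $g(a) = g(b) \implies a = b$. Hence $s(p) = s(q) \iff \hat{s}(p) = \hat{s}(q)$. The third bullet, $s(p) > s(q) \iff \hat{s}(p) > \hat{s}(q)$, then comes for free — either by repeating the first argument with $p$ and $q$ interchanged, or, as the excerpt already notes, as a purely logical consequence of the first two conditions. Since $p,q$ were arbitrary, the three conditions hold for all pairs in $\Delta^{k-1}$, which is precisely the statement that $s$ and $\hat{s}$ are order-isomorphic.

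There is no genuine obstacle here; the only point that warrants a moment's care is the ``$\Longleftarrow$'' direction of the equality case, where one must invoke injectivity of $g$ — equivalently, one must use that the hypothesis is \emph{strict} monotonicity and not merely weak monotonicity, since a constant or locally constant $g$ would collapse distinct scores and break the equivalence. I would flag this explicitly so the reader sees why strictness is essential. (One could alternatively phrase the result as an immediate instance of the fact, recorded in Lemma~\ref{equiv}, that order-isomorphism is an equivalence relation generated by such reparametrizations, but the direct three-line check above is the cleanest route.)
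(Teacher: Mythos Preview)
Your proposal is correct and takes essentially the same approach as the paper, which simply writes ``Follows from definition of monotonicity.'' You have merely unpacked that one-line appeal into an explicit verification of the three bullet points, including the useful observation that strictness of $g$ is what guarantees injectivity and hence the equality biconditional.
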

\begin{proof}
    Follows from definition of monotonicity.
\end{proof}
\begin{remark}
    The above lemma seems quite obvious but we state it here to formally state that $\Gamma^t$ is invariant under monotone transformations of the score function. Namely, scaling, square root, etc., have no effect. This is important as there are some potential score functions, discussed in the section below, which are actually just monotone transformations of other score functions. This lemma allows us to collapse them as one case.
\end{remark}

\subsection{Exploring Score Functions}

For a practitioner of the ATC method, an important question that may arise is which score function to use. Indeed, the literature contains many examples of score functions that capture the rough intuition of being higher when more skewed and lower when more uniform. Below is a list of such score functions\footnote{Note that the use of score function here is not the same as a score function in decision theory.} seen in the literature, given a vector $p$ with components $(p_i)_1^k$:
\begin{itemize}
    \item $L^\infty$ norm, or maximum confidence: $\max_1^k p_i$
    \item Negative entropy: $\sum_1^k p_i \log(p_i)$
    \item $L^2$ norm: $\sqrt{\sum_1^k p_i^2}$. Of course, we can take the squared $L^2$, which is order-isomorphic due to strict monotonicty of square root (Lemma \ref{monotransform}). The squared $L^2$ is quite prevalent in many different fields: in economics it is called the Herfindahl-Hirschman index and measures market concentration \citep{rhoades1993herfindahl}, in ecology it is called the Simpson index and measures wildlife diversity \citep{simpson1949measurement}. The underlying principle is that it captures how uniform or skewed a probability vector is. The $L^2$ norm has a deep connection to the $\chi^2$ statistical test.
    \item $L^1$ distance to the uniform vector $\mathcal{U}_k = (1/k, \dots ,1/k)$: $\sum_1^k |p_i-1/k|$. Note that half the $L^1$ distance gives the total variation distance, and these two are clearly order-isomorphic (Lemma \ref{monotransform}).
    \item $L^2$ distance to $\mathcal{U}_k$: $\sqrt{\sum_1^k (p_i-1/k)^2}$
    \item Other distances or divergences to $\mathcal{U}_k$, e.g., Jensen-Shannon
    
\end{itemize}

Thus, the method of \citet{garg2022} has numerous possible implementations, depending on the chosen score function. It is not clear whether there is a best score function, or how to go about choosing one. Certainly, different score functions capture different information about the vector. The task for the practitioner then is seeing which score function works best for their particular case, through experimentation.

\subsection{The Case of Binary Classification}

While choosing the best score function seems to be a difficult problem that can only be solved experimentally, in the particular case of binary classification, or $\R^2$, we can get a simple characterization. We can show that a large class of natural score functions are order-isomorphic. This result shows that in the binary classification case, the choice becomes quite easy, and it follows to simply choose the most computationally cheap score function. In particular, we have the following theorem.

\begin{theorem}
	\label{R2isomorphic}
    In $\R^2$, on the probability simplex $\Delta^1$, the following score functions are order-isomorphic:
    \begin{enumerate}
        \item $L^\infty$ norm
        \item $L^2$ norm 
        \item $L^1$ distance to the uniform vector $(1/2,1/2)$
        \item $L^2$ distance to the uniform vector $(1/2,1/2)$
        \item negative entropy
        \item Jensen-Shannon distance to the uniform vector $(1/2,1/2)$
    \end{enumerate}
\end{theorem}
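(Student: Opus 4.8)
The plan is to exploit the fact that $\Delta^1$ is one-dimensional: every probability vector in $\Delta^1$ has the form $(p,1-p)$ with $p \in [0,1]$, so each of the six score functions reduces to a function of the single real parameter $p$. Since order-isomorphism is an equivalence relation (Lemma \ref{equiv}), it suffices to fix one reference function and show that each of the six is order-isomorphic to it. The natural choice of reference is $r(p) = |p - 1/2|$, the distance of $p$ from the center of the simplex.

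For the four \emph{algebraic} score functions (items 1--4) I would simply express each as an explicit strictly monotonic function of $|p - 1/2|$ and invoke Lemma \ref{monotransform}. Concretely: the $L^\infty$ norm is $\max(p, 1-p) = 1/2 + |p - 1/2|$; the $L^1$ distance to $(1/2,1/2)$ is $|p - 1/2| + |1/2 - p| = 2|p-1/2|$; the $L^2$ distance to $(1/2,1/2)$ is $\sqrt{(p-1/2)^2 + (1/2-p)^2} = \sqrt{2}\,|p-1/2|$; and the $L^2$ norm satisfies $p^2 + (1-p)^2 = 2(p-1/2)^2 + 1/2$, so it equals $\sqrt{2(p-1/2)^2 + 1/2}$, a strictly increasing function of $|p-1/2|$. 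Each of these is manifestly a strictly monotone transformation of $r$, hence order-isomorphic to it.

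The two \emph{transcendental} cases --- negative entropy (item 5) and the Jensen--Shannon distance to the uniform vector (item 6) --- are where the real work lies, since neither is literally an algebraic function of $|p-1/2|$. Here I would argue via symmetry together with one-sided monotonicity. Both functions are invariant under swapping the two components, i.e.\ $s(p,1-p) = s(1-p,p)$: for negative entropy this is obvious, and for the JS case it holds because the reference vector $(1/2,1/2)$ is itself swap-invariant (and for the JS \emph{distance} we may first pass to the JS \emph{divergence}, since the square root is strictly monotone by Lemma \ref{monotransform}). Given this symmetry, setting $h(u) = s(1/2+u,\,1/2-u)$ for $u \in [0,1/2]$ gives $s(p,1-p) = h(|p-1/2|)$, so $s$ is order-isomorphic to $r$ exactly when $h$ is strictly increasing --- equivalently, when $s(p,1-p)$ is strictly increasing in $p$ on $[1/2,1]$. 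For negative entropy this follows from $\frac{d}{dp}\big(p\log p + (1-p)\log(1-p)\big) = \log\frac{p}{1-p} > 0$ for $p > 1/2$; for the JS divergence an analogous routine derivative computation yields strict positivity on $(1/2,1)$. Chaining all six order-isomorphisms to $r$ through the transitivity guaranteed by Lemma \ref{equiv} then gives the result. The main obstacle is precisely verifying strict one-sided monotonicity in these last two cases and checking that the symmetry reduction is legitimate; the algebraic cases are essentially bookkeeping.
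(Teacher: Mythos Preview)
Your proposal is correct and follows essentially the same approach as the paper: parametrize $\Delta^1$ by a single real variable, reduce each score function to a one-variable function, and verify strict monotonicity on one half of the simplex (you via an explicit symmetry argument, the paper via a ``without loss of generality $a\ge 1-a$'' assumption). The only cosmetic difference is the choice of reference --- you use $r(p)=|p-1/2|$ while the paper uses the $L^\infty$ norm directly --- but since you yourself note $L^\infty = 1/2 + |p-1/2|$, these are equivalent hubs for the transitivity argument.
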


\begin{proof}
    Let $p,q \in \R^2$, particularly $p,q \in \Delta^1$, meaning that their $L^1$ norm is $1$. Parameterize them as $p=(a,1-a)$ and $q=(b,1-b)$. Without loss of generality, $a \geq 1-a$ and $b \geq 1-b$. Based on the $L^1$ norm constraint, $a,b \in [0.5,1]$. Observe that with this parameterization, $\norm{p}{\infty} = a$ and $\norm{q}{\infty}=b$.
    
    We will show that each of score functions (2) - (6) are order-isomorphic to (1), as numbered above. This will then imply that all of them are order-isomorphic to each other by Lemma \ref{equiv}.
    
    Our proof strategy is as follows: from our parameterization, we can consider the score function to only be a function of the first component (e.g., $a$ or $b$). If the score function is strictly monotone in this argument, then it follows that $a < b$ iff $s(a) < s(b)$. Another way of looking at it is that the score function is a strictly monotone transformation of $L^{\infty}$ and so by Lemma \ref{monotransform}, they are order-isomorphic.
    
    We now show this for each of the score functions listed above from (2) - (6).
    
    \begin{enumerate}
        \setcounter{enumi}{1}
        \item We work with the squared $L^2$ for ease of computation. We have $a^2 + (1-a)^2 = 2a^2 - 2a + 1$, which is strictly monotone on the interval $a \in [0.5,1]$. Thus, the result follows. To make this clear, observe that $\norm{p}{2} < \norm{q}{2} \iff 2a^2+1-2a < 2b^2+1-2b \iff a^2-a < b^2-b \iff a<b$, as the function $g(y)=y^2-y$ is strictly increasing for $y \in [0.5,1]$.
        \item $\norm{p-(1/2,1/2)}{1} = |a-0.5| + |(1-a)-0.5| = a-0.5 + 0.5-1+a = 2a-1$ which is strictly monotone in $a$ over $[0.5,1]$ hence the result follows.
        \item We work with squared $L^2$. We have $(a-0.5)^2 + (1-a-0.5)^2 = 2(a-0.5)^2 = 2a^2 - 2a + 0.5$, which is strictly monotone for $a \in [0.5,1]$.
        \item We have $a\log(a) + (1-a)\log(1-a)$. Taking the first derivative, we get $\log(a)-\log(1-a)$, which is $>0$ for $a \in [0.5,1]$ as $\log$ is strictly monotone. It then follows that $a \log(a) + (1-a) \log(1-a)$ is strictly monotone for $a \in [0.5,1]$.
        \item We have $\frac{1}{2}[a \log\frac{a}{0.5} + (1-a) \log\frac{1-a}{0.5} + 0.5 \log\frac{0.5}{a} + 0.5 \log\frac{0.5}{1-a}]$, which is strictly increasing on $a \in [0.5,1]$ (can be seen by taking the first derivative or graphing). 
    \end{enumerate}
    
    Thus, all of (2) - (6) are order-isomorphic with (1), and hence they are all order-isomorphic with each other.
    
\end{proof}

To summarize, the commonly used score functions --- $L^\infty$, $L^2$ (and its squared form, known variously as the Herfindahl-Hirschman index or the Simpson index), $L^1$ to uniform (and its scaled by $1/2$ form, the total variation distance), $L^2$ to uniform, negative entropy, and Jensen-Shannon to uniform --- all induce the same orderings over probability vectors in $\R^2$.

\subsection{The Case of Multi-Class Classification ($\mathbb{R}^{\geq 3}$)}

Unfortunately, Theorem \ref{R2isomorphic} does not hold for $\R^3$. For a simple counterexample showing that $L^2$ and $L^\infty$ are not order-isomorphic, consider $p=(0.5,0.2,0.3)$ and $q=(0.5,0.5,0)$. Then, $\norm{p}{2}^2=0.38$, $\norm{q}{2}^2=0.5$, and $\norm{p}{\infty} = \norm{q}{\infty} = 0.5$. So $\norm{p}{2}<\norm{q}{2}$ but $\norm{p}{\infty} \nless \norm{q}{\infty}$. It is possible to construct counterexamples for the other score functions as well. As well, as $\mathbb{R}^3$ is a subspace of $\mathbb{R}^{>3}$, it does not hold for $\mathbb{R}^{\geq 3}$.

The order-isomorphism that still holds in $\mathbb{R}^{\geq 3}$ is between the $L^2$ norm and $L^2$ distance to uniform, shown below.

\begin{theorem}
	\label{l2equiv}
	In $\mathbb{R}^k$, on the probability simplex $\Delta^{k-1}$, the score functions $\norm{\bullet}{2}$ ($L^2$ norm) and $\norm{\bullet-1/k}{2}$ ($L^2$ distance to uniform) are order-isomorphic.
\end{theorem}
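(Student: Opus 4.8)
The plan is to reduce everything to a single algebraic identity that holds on the simplex, and then invoke Lemmas \ref{monotransform} and \ref{equiv}. First I would pass to the squared versions of both score functions: since $t \mapsto \sqrt{t}$ is strictly monotone on $[0,\infty)$ and both $\norm{p}{2}^2$ and $\norm{p-1/k}{2}^2$ are nonnegative, Lemma \ref{monotransform} tells us that $\norm{\bullet}{2}$ is order-isomorphic to $\norm{\bullet}{2}^2$ and likewise $\norm{\bullet-1/k}{2}$ is order-isomorphic to $\norm{\bullet-1/k}{2}^2$. So it suffices to compare the squared quantities.

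The key step is the expansion of the squared distance to uniform using the defining constraint $\sum_{i=1}^k p_i = 1$ of $\Delta^{k-1}$. Writing out $\norm{p-1/k}{2}^2 = \sum_{i=1}^k (p_i - 1/k)^2 = \sum_{i=1}^k p_i^2 - \tfrac{2}{k}\sum_{i=1}^k p_i + k\cdot\tfrac{1}{k^2}$, the cross term collapses to $2/k$ and the last term to $1/k$, yielding the identity $\norm{p-1/k}{2}^2 = \norm{p}{2}^2 - 1/k$ for every $p \in \Delta^{k-1}$. In other words, the squared distance to uniform is exactly the squared $L^2$ norm shifted by a fixed constant $1/k$.

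Since $x \mapsto x - 1/k$ is a strictly monotone (indeed affine) transformation of $\R$, Lemma \ref{monotransform} gives that $\norm{\bullet}{2}^2$ and $\norm{\bullet-1/k}{2}^2$ are order-isomorphic on $\Delta^{k-1}$. Chaining the three order-isomorphisms (norm to squared norm, squared norm to squared distance, squared distance to distance) through the transitivity established in Lemma \ref{equiv} yields that $\norm{\bullet}{2}$ and $\norm{\bullet-1/k}{2}$ are order-isomorphic, as claimed. There is no real obstacle here — the only point worth care is that the identity relies crucially on the $L^1$ normalization $\sum_i p_i = 1$, which is precisely why this order-isomorphism survives in all dimensions whereas the others in Theorem \ref{R2isomorphic} do not.
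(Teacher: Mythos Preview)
Your proof is correct and in fact more direct than the paper's. The paper also reduces to the squared versions and ultimately shows that $\norm{p}{2}^2$ and $\norm{p-1/k}{2}^2$ differ by a constant on $\Delta^{k-1}$, but it does so indirectly: it eliminates $p_k$ via the simplex constraint, computes the partial derivatives of both squared score functions with respect to the remaining $k-1$ free variables, observes that $\nabla s = \nabla \hat{s}$, and then invokes the mean value theorem to conclude the two functions differ by a constant before applying Lemma~\ref{monotransform}. Your one-line expansion $\norm{p-1/k}{2}^2 = \norm{p}{2}^2 - \tfrac{2}{k}\sum_i p_i + \tfrac{1}{k} = \norm{p}{2}^2 - \tfrac{1}{k}$ reaches the same conclusion without any calculus and even identifies the constant explicitly. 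The gradient argument buys a bit of generality in spirit (it hints at why the result extends to any reference vector with equal components, as the paper notes in its remark), but for the statement as given your algebraic identity is the cleaner route.
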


\begin{proof}
	Let $p \in \Delta^{k-1}$ be written as $(p_i)_1^k$. We have the constraint that $\sum_1^k p_i = 1$. To enforce this constraint, we can substitute $p_k=1-\sum_1^{k-1} p_i$. For ease of computation, we work with the squared forms of the score functions, which we denote by $s$ and $\hat{s}$ respectively. So $s(p) = \sum_1^{k-1} p_i^2 + (1-\sum_1^{k-1} p_i)^2$, and $\hat{s}(p) = \sum_1^{k-1}(p_i-\frac{1}{k})^2 + (1-\frac{1}{k}-\sum_1^{k-1} p_i)^2$.
	
	Instead of expanding these expressions, we can analyze their gradients. Now, $s$ and $\hat{s}$ are functions of $k-1$ independent variables (because of our substitution for $p_k$). For some variable $p_j$, we have $\frac{\partial s}{\partial p_j} = 2p_j-2(1-\sum_1^{k-1}p_i)$, and $\frac{\partial \hat{s}}{\partial p_j} = 2(p_j-\frac{1}{k})-2(1-\frac{1}{k}-\sum_1^{k-1}p_j)$. By expanding the $1/k$ term, we can cancel them. Thus, $\forall p_i$, $\frac{\partial s}{\partial p_i} = \frac{\partial \hat{s}}{\partial p_i}$. 
	
	Hence, $\nabla s = \nabla \hat{s}$, and by the mean value theorem, $s$ and $\hat{s}$ differ only by a constant, and thus $\hat{s}$ is a monotone transformation of $s$, which by Lemma \ref{monotransform} completes the proof.
\end{proof}

\begin{remark}
	In fact, Theorem \ref{l2equiv} holds for the $L^2$ distance to a vector whose components are all the same, as this will allow the gradients to be equal. As we are on $\Delta^{k-1}$, the only such vector is the uniform probability vector. In general, $L^2$ norm and $L^2$ distance to a fixed vector are not order-isomorphic. For example, fix $r=(0.3,0.7)$. Take $p=(0.4,0.6)$ and $q=(0.5,0.5)$. Then observe $\norm{p}{2}^2=0.52 > \norm{q}{2}^2=0.5$, but $\norm{p-r}{2}^2=0.02 < \norm{q-r}{2}^2=0.08$.
\end{remark}

However, because in general the other score functions from Theorem \ref{R2isomorphic} do not induce the same orderings, we cannot guarantee that they will yield the same estimate. Hence, in the multi-class setting, we once again have to ponder which score function will give us the best estimate. As we cannot know this theoretically, in the next section, we run experiments on three well-known NLP data sets with different score functions.

\section{Experiments}

\subsection{Methods}

We ran experiments on three NLP data sets for multi-class classification: Emotion \citep{saravia2018carer}, a data set of tweets labeled with one of 6 emotions, TweetEval \citep{barbieri2018semeval}, a data set of tweets labeled with one of 20 emojis, and Banking77 \citep{casa2020}, a data set of online banking queries labeled with one of 77 intents. The data sets came with train, validation, and testing splits; if not, a validation set of 20\% of the training set was created and set aside.

The basic schema of the experiment is that we finetuned DistilBERT-base-uncased \citep{sanh2019distilbert} on the training set. The hyperparameters used are learning rate: 2e-5; batch size: 256; epochs: 5; weight decay: 0.01. We then used the validation set to estimate performance on the testing set, and then actually tested the model on the testing set. We calculated and stored the absolute difference between the estimated accuracy and the true accuracy. We present our methodology in the algorithm box.

\begin{algorithm}
\caption{Experiment pseudocode}
\begin{algorithmic}
\Require Dataset with $k$ classes labeled $\{0, 1, ..., k-1\}$, split into training $\mathcal{T}$, testing $\mathcal{S}$, and validation sets $\mathcal{V}$; pre-trained DistilBERT-base-uncased model

\State Results $\leftarrow$ empty list

\For{$d \in \{2,3,...k\}$}
\State $T \leftarrow$ subset of $\mathcal{T}$ with labels $< d$
\State $S \leftarrow$ subset of $\mathcal{S}$ with labels $< d$
\State $V \leftarrow$ subset of $\mathcal{V}$ with labels $< d$

\State Finetune DistilBERT-base-uncased on $T$

\State $a \leftarrow$ accuracy of model on $S$

\For{$m$ in \{ATC-Max, ATC-NE, ATC-L2n, ATC-L1, ATC-L2, ATC-JS, DoC\}}

\For{$n=1$ to $1000$}
\State $V^* \leftarrow$ bootstrap $V$ with $|V|$ samples

\State $\tilde{a} \leftarrow$ estimated accuracy on $S$ using $V^*$ using method $m$

\State $\epsilon \leftarrow |a - \tilde{a}|$

\State Append list $[i,m,n,\epsilon]$ to Results

\EndFor

\EndFor
\EndFor

\State \Return Results

\end{algorithmic}
\end{algorithm}

We tested 6 different score functions for \citet{garg2022}'s method, which is referred to in the results as ATC (average thresholded confidence) per their paper. The score functions we tested are: $L^\infty$ (ATC-Max), negative entropy (ATC-NE), $L^2$ norm (ATC-L2n), $L^1$ distance to uniform (ATC-L1), $L^2$ distance to uniform (ATC-L2), and Jensen-Shannon distance to uniform (ATC-JS). We also implemented an older baseline for accuracy estimation, the difference-in-confidence (DoC) method \citep{guillory2021predicting}, as a comparison. Briefly, this method uses the difference in average of max confidence (hence, DoC) as a distance between distributions. By fitting a linear regression using validation data, this method estimates accuracy on the test set by the DoC between the test set and a held-out set.

We also wanted to explore the performance of different score functions for varying numbers of dimensions. The data sets we use have 6, 20, and 77 dimensions respectively. To simulate other dimensions, we simply just took the first $\hat{k}$, $2 \leq \hat{k} \leq k$ classes, where $k$ is the total number of classes: e.g., the Emotion data set has labels $\{0,1,...,5\}$, we take $\{0,1\}, \{0,1,2\}, \{0,1,2,3\}$, etc, to make, respectively, a 2-class, 3-class, a 4-class problem, etc.

Therefore, for each dimension, we tried 7 total methods. For each dimension-method pair, we did 1000 runs of the experiment, by taking a subset of the validation set via bootstrap sampling for each run. This makes our results more statistically robust and allows us to quantify a confidence interval.

\subsection{Results}

For sake of space, we only present the full table of results for the Emotion experiment, which we see in Table \ref{resemotion}. The full table of results for TweetEval and Banking77 are in the appendix. The bolded value corresponds to the lowest mean absolute error for that dimension. First of all, ATC outperforms the DoC baseline no matter the chosen score function. Second of all, as expected per Theorem \ref{R2isomorphic}, for 2 dimensions, all the ATC score functions yielded exactly the same result for every run. Looking at Table \ref{resemotion}, the best score function is still unclear. It seems that Jensen-Shannn performs best for dimensions 3, 4, and 6, but is beat for dimension 4. It seems the best as it wins in most dimensions.

\renewcommand{\arraystretch}{1.5}
\begin{table*}[!htbp] \centering 
	\caption{Mean [95\% confidence interval] absolute error of accuracy prediction for Emotion data set. Bolded cells are the lowest mean absolute error for that dimension.} 
	\label{resemotion} 
	\begin{tabular}{c*{7}{>{\centering\arraybackslash}p{1.5cm}}} 
		\toprule
		\# Dimensions & ATC-JS & ATC-L1 & ATC-L2 & ATC-L2n & ATC-Max & ATC-NE & DoC \\ 
		\midrule
		2 & \textbf{0.49 [0.16,1.02]} & \textbf{0.49 [0.16,1.02]} & \textbf{0.49 [0.16,1.02]} & \textbf{0.49 [0.16,1.02]} & \textbf{0.49 [0.16,1.02]} & \textbf{0.49 [0.16,1.02]} & 1.98 [0.06,5.79] \\ 
		3 & \textbf{1.28 [0.00,3.07]} & 1.37 [0.07,2.93] & 1.37 [0.07,2.93] & 1.38 [0.07,2.93] & 1.37 [0.07,2.93] & 1.29 [0.07,3.00] & 3.09 [0.11,8.14] \\ 
		4 & \textbf{1.31 [0.06,2.69]} & 1.33 [0.12,2.75] & 1.33 [0.06,2.75] & 1.33 [0.06,2.75] & 1.33 [0.12,2.75] & 1.32 [0.12,2.69] & 3.11 [0.14,8.20] \\ 
		5 & 0.84 [0.05,2.48] & \textbf{0.66 [0.00,1.71]} & \textbf{0.66 [0.05,1.76]} & \textbf{0.66 [0.05,1.76]} & \textbf{0.66 [0.00,1.71]} & 0.87 [0.05,2.02] & 2.73 [0.14,7.82] \\ 
		6 & \textbf{1.22 [0.25,2.65]} & 1.33 [0.30,2.85] & 1.33 [0.30,2.85] & 1.33 [0.30,2.85] & 1.33 [0.30,2.85] & 1.26 [0.20,2.80] & 2.97 [0.11,7.74] \\ 
		\bottomrule
	\end{tabular} 
\end{table*} 

However, this is only for the Emotion data set. Indeed, in experiments on TweetEval and Banking77, a rather different picture emerges. To look at this more rigorously, we calculated, for each of the three data sets, the proportion of dimensions that each method yields the lowest mean absolute error. In this calculation, we exclude the 2-class setting, per Theorem \ref{R2isomorphic}. We consider a tie by awarding all tied functions a win; thus the counts may not add up to the total number of dimensions being used as ``rounds" in this comparison. 

\renewcommand{\arraystretch}{1}
\begin{table}[!htbp] \centering 
	\caption{Number of dimensions for which each method achieves the lowest mean absolute error. Total is the number of dimensions for which comparisons were made (excludes 2-class setting). Existence of ties means that counts do not necessarily add to total.}
	\label{bestmethod} 
	\begin{tabular}{cccc} 
		\toprule
		Method & Emotion & TweetEval & Banking77 \\
		\midrule
		ATC-JS & 3 & 1 & 2 \\
		ATC-L1 & 1 & 5 & 12 \\
		ATC-L2 & 0 & 4 & 9 \\
		ATC-L2n & 0 & 4 & 2 \\
		ATC-Max & 0 & 7 & 39 \\
		ATC-NE & 0 & 1 & 4 \\
		DoC & 0 & 0 & 8 \\
		\bottomrule
		\textit{Total} & 4 & 18 & 75 \\
		\bottomrule
	\end{tabular} 
\end{table} 

Some interesting observations can be noted here. First, ATC-JS performs poorly for TweetEval and Banking77, contradicting our initial excitement with Emotion. Secondly, ATC-Max wins a plurality of dimensions for TweetEval, and wins an even larger plurality for Banking77. Interestingly, Banking77 is the only data set for which DoC bests the ATC methods at least once. This may be because it is a more complicated data set. In fact, the mean absolute error on Banking77 is much higher than the other data sets for the same method and dimension. 

We also see, as expected by Theorem \ref{l2equiv}, that ATC-L2 and ATC-L2n yield equal (or equal up to several decimal places, reflecting computational precision) estimates. For TweetEval, ATC-L2 is always tied with ATC-L2n, while for Banking77 they are within $10^{-6}$ of each other.

Now, in our discussion so far we have only been comparing the actual means of the absolute error. However, as we took bootstrap samples, we have a bootstrap distribution. We can therefore use the entire distribution of errors to make comparisons. Figure \ref{tweeteval_fig} shows violin plots of the distribution of errors for each method and dimension for the TweetEval data set. Firstly, we can see that the DoC distribution is much wider than the ATC distributions. They also show considerable overlap between the distributions, and so it is difficult to say whether one is really lower or higher than the other. We are therefore interested in using this distribution to test if the differences between methods are statistically significant.

\begin{figure*}
	\label{tweeteval_fig}
	\caption{Distribution of absolute errors for TweetEval, represented as violin plot with overlayed box plot to show quartiles. Headings refer to number of dimensions.}
	\includegraphics[width=\textwidth]{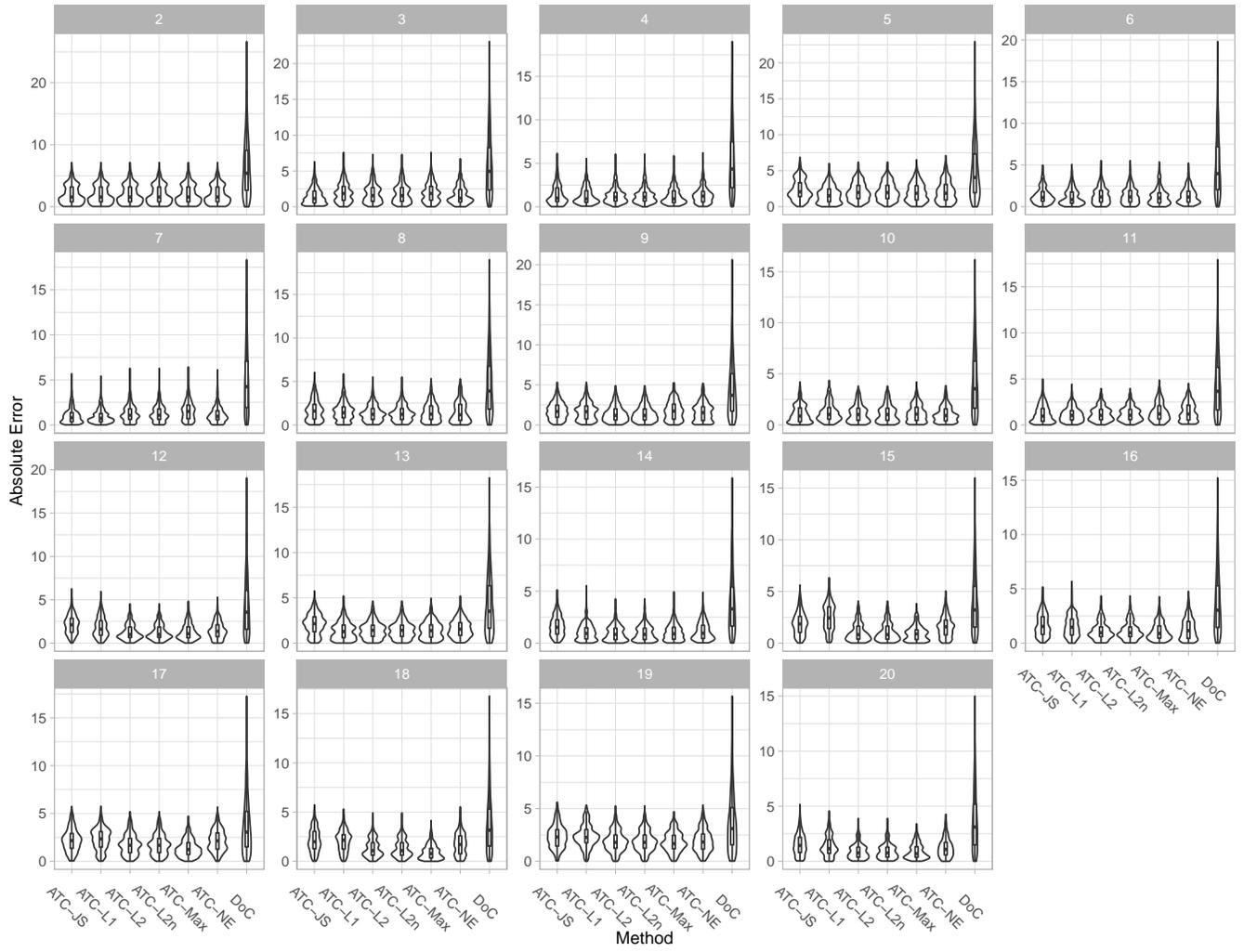}
\end{figure*}

Hence, we performed ANOVA tests on the experimental results for our three data sets. We performed a two-way ANOVA on the mean absolute error, using the dimension and method as independent variables. Our ANOVA was appropriately powered at a significance level of 0.05, power 0.99, and minimum detectable effect size of 0.1 (what \citet{cohen2013statistical} calls a ``small" effect size for the $f^2$ statistic). For all three data sets, the two-way ANOVA detected a significant difference between methods. We then performed post-hoc analysis with Tukey's Honestly Significant Difference (HSD) method, which tests pairwise differences. For Emotion, the only significant differences were seen between DoC and the other ATC methods. For TweetEval and Banking77, significant differences were observed for all pairwise comparisons except for ATC-L2 and ATC-L2n, as expected due to Theorem \ref{l2equiv}. It may be that significant differences were yielded for TweetEval and Banking77 as the experimental results data set is much larger than for Emotion (e.g., for Banking77, we have 1000 runs $\times$ 76 possible dimensions $\times$ 7 methods $=$ 532,000 data points). Looking at the comparisons between score functions for ATC, even when there is a significant difference, the differences are on the magnitude of $10^{-1}$, suggesting that different score functions, while they may yield different estimates, yield estimates within one decimal point of each other.

\section{Discussion and Conclusion}

In this paper we explored the average thresholded confidence (ATC) method of \citet{garg2022}, specifically the question of how to choose the score function, an important hyperparameter in their method. We presented some theoretical results about how different score functions can give the same estimate. Our results on orderings of probability vectors, namely that several score functions induce the same ordering, may also be of independent interest. Because a clean theoretical result does not exist for {$\mathbb{R}^{\geq 3}$, we conducted several large-scale experiments on three NLP data sets to analyze which score function best predicts the true accuracy on a test set. Our ANOVA results show that while there are significant differences between the score functions, they are on the magnitude of $10^{-1}$, which suggests that the choice of score function may not be important. Therefore, we would suggest to use the computationally fastest score function, which, in our benchmarks, is the $L^\infty$ norm. In our tests, this was implemented in Python by the numpy library. We compared this against all the other score functions, variously implemented in numpy and scipy. The negative entropy and Jensen-Shannon were by far the slowest, likely owing to the difficulty in calculating logarithms. However, compared to the computational cost of inference, these differences may be negligible.

Therefore, for a practitioner of \citet{garg2022}'s method, we conclude that the $L^\infty$ is the best score function to use. In the binary case, the choice definitely does not matter; and in the multi-class case, it may, but not so much, and the $L^\infty$ is fastest to compute. 

It is important to note that our comparisons between methods were predicated on taking bootstrap samples of the validation set, and calculating a mean, which the original paper of \citet{garg2022} did not do. We therefore suggest that this is a simple modification to their method which decreases random error.

There are still several interesting avenues of study. First of all, it is not clear why the score function in \citet{garg2022}'s method has to measure the ``confidence" of the classifier. A justification is not provided in \citet{garg2022}, and our results simply show the importance of the induced ordering. Elucidating if the score function needs to necessarily fulfill this property, and why, would be a gainful line of research. The ATC method outperforms several previous baselines in \citet{garg2022}, and we reinforce that in our paper by showing it outperforms DoC - however, it is still unclear why it seems to work so well. As well, the method of \citet{garg2022} is quite new, and thus has not necessarily been corroborated and tested extensively. In this paper we tested its performance on NLP data sets. It will be interesting to see if the superiority of ATC continues.

\section*{Acknowledgements}

Muhammad Maaz was an intern at Amazon at the time this work was done. Muhammad Maaz would also like to acknowledge support through the Vanier Canadian Graduate Scholarship, administered by the Natural Sciences and Engineering Research Council of Canada.

\newpage
\bibliographystyle{unsrtnat}
\bibliography{ref}

\onecolumn
\appendix
\section{Appendix}

\subsection{Full results on TweetEval}

Below are the full results on the TweetEval dataset.

\begin{table}[!htbp] 
\small
\centering 
  \caption{Mean [95\% confidence interval] absolute error of accuracy prediction for TweetEval data set.} 
  \label{restweeteval} 
\begin{tabular}{c*{7}{>{\centering\arraybackslash}p{1.5cm}}} 
\toprule
\# Dimensions & ATC-JS & ATC-L1 & ATC-L2 & ATC-L2n & ATC-Max & ATC-NE & DoC \\ 
\midrule
2 & 2.00 [0.08, 5.24] & 2.00 [0.08, 5.24] & 2.00 [0.08, 5.24] & 2.00 [0.08, 5.24] & 2.00 [0.08, 5.24] & 2.00 [0.08, 5.24] & 6.32 [0.23,17.69] \\ 
3 & 1.46 [0.10, 4.14] & 1.94 [0.09, 4.67] & 1.85 [0.09, 4.95] & 1.85 [0.09, 4.95] & 1.94 [0.09, 4.67] & 1.62 [0.04, 4.44] & 5.80 [0.28,16.10] \\ 
4 & 1.38 [0.03, 4.22] & 1.27 [0.11, 3.55] & 1.29 [0.10, 3.52] & 1.29 [0.10, 3.52] & 1.24 [0.10, 3.42] & 1.35 [0.07, 3.68] & 5.22 [0.26,14.62] \\ 
5 & 2.36 [0.22, 5.20] & 1.69 [0.05, 4.01] & 2.11 [0.22, 4.67] & 2.11 [0.22, 4.67] & 1.98 [0.06, 4.67] & 2.10 [0.07, 5.34] & 4.92 [0.20,13.85] \\ 
6 & 1.33 [0.12, 3.71] & 1.17 [0.02, 3.50] & 1.37 [0.03, 3.77] & 1.37 [0.03, 3.77] & 1.23 [0.06, 3.86] & 1.34 [0.11, 3.77] & 4.98 [0.34,13.74] \\ 
7 & 1.00 [0.03, 3.06] & 0.95 [0.03, 2.84] & 1.27 [0.05, 3.40] & 1.27 [0.05, 3.40] & 1.56 [0.08, 3.92] & 1.15 [0.05, 3.52] & 4.86 [0.20,14.07] \\ 
8 & 1.65 [0.09, 4.17] & 1.53 [0.07, 3.85] & 1.34 [0.04, 3.61] & 1.34 [0.04, 3.61] & 1.46 [0.06, 3.87] & 1.48 [0.03, 4.28] & 4.67 [0.20,12.95] \\ 
9 & 1.78 [0.09, 4.12] & 1.68 [0.15, 3.94] & 1.40 [0.03, 3.82] & 1.40 [0.03, 3.82] & 1.80 [0.12, 4.18] & 1.61 [0.08, 4.36] & 4.50 [0.23,13.12] \\ 
10 & 1.07 [0.03, 3.07] & 1.23 [0.10, 3.35] & 1.12 [0.04, 3.13] & 1.12 [0.04, 3.13] & 1.16 [0.07, 2.99] & 1.05 [0.06, 2.97] & 4.24 [0.20,11.53] \\ 
11 & 1.18 [0.05, 3.22] & 1.13 [0.03, 2.84] & 1.20 [0.11, 3.01] & 1.20 [0.11, 3.01] & 1.39 [0.02, 3.52] & 1.43 [0.09, 3.44] & 4.25 [0.13,11.72] \\ 
12 & 2.14 [0.41, 4.35] & 1.83 [0.15, 4.25] & 1.26 [0.08, 3.28] & 1.26 [0.08, 3.28] & 1.31 [0.07, 3.47] & 1.47 [0.04, 3.36] & 4.17 [0.15,11.99] \\ 
13 & 2.07 [0.08, 4.29] & 1.37 [0.05, 3.55] & 1.44 [0.07, 3.61] & 1.44 [0.07, 3.61] & 1.44 [0.05, 3.50] & 1.62 [0.15, 3.58] & 4.24 [0.19,11.63] \\ 
14 & 1.65 [0.13, 3.69] & 1.05 [0.06, 2.93] & 0.96 [0.04, 2.75] & 0.96 [0.04, 2.75] & 1.06 [0.07, 2.82] & 1.15 [0.07, 2.92] & 3.90 [0.20,11.58] \\ 
15 & 1.87 [0.15, 4.16] & 2.44 [0.12, 4.84] & 1.05 [0.04, 3.18] & 1.05 [0.04, 3.18] & 0.93 [0.01, 2.67] & 1.56 [0.05, 3.69] & 3.89 [0.20,10.79] \\ 
16 & 1.69 [0.20, 3.90] & 1.57 [0.05, 3.61] & 1.14 [0.05, 2.96] & 1.14 [0.05, 2.96] & 1.08 [0.08, 2.85] & 1.32 [0.05, 3.51] & 3.68 [0.15,10.58] \\ 
17 & 2.19 [0.12, 4.70] & 2.32 [0.28, 4.49] & 1.74 [0.12, 4.22] & 1.74 [0.12, 4.22] & 1.37 [0.14, 3.40] & 2.15 [0.24, 4.52] & 3.68 [0.14,10.27] \\ 
18 & 2.14 [0.13, 4.36] & 2.06 [0.15, 4.04] & 1.35 [0.06, 3.21] & 1.35 [0.06, 3.21] & 0.93 [0.04, 2.74] & 1.74 [0.08, 4.01] & 3.75 [0.12,10.78] \\ 
19 & 2.34 [0.36, 4.64] & 2.36 [0.31, 4.53] & 1.90 [0.19, 4.07] & 1.90 [0.19, 4.07] & 1.84 [0.22, 3.67] & 1.94 [0.22, 4.28] & 3.64 [0.17,10.41] \\ 
20 & 1.52 [0.08, 3.56] & 1.38 [0.02, 3.42] & 0.92 [0.03, 2.54] & 0.92 [0.03, 2.54] & 0.91 [0.06, 2.49] & 1.24 [0.02, 3.38] & 3.67 [0.12,10.11] \\ 
\bottomrule 
\end{tabular} 
\end{table}

\subsection{Full results on Banking77}

Below are the full results on the Banking77 dataset.

\begin{small}
\begin{longtable}{c*{7}{>{\centering\arraybackslash}p{1.5cm}}} 
  \caption{Mean [95\% confidence interval] absolute error of accuracy prediction for Banking77 data set.} 
  \label{resbanking77}
\small
\\
\toprule
\# Dimensions & ATC-JS & ATC-L1 & ATC-L2 & ATC-L2n & ATC-Max & ATC-NE & DoC \\ 
\midrule
2 & 11.02 [1.25,22.50] & 11.02 [1.25,22.50] & 11.02 [1.25,22.50] & 11.02 [1.25,22.50] & 11.02 [1.25,22.50] & 11.02 [1.25,22.50] & 36.19 [1.53,94.85] \\ 
3 & 4.25 [0.83,10.00] & 4.26 [0.83,10.00] & 4.16 [0.00,10.00] & 4.16 [0.00,10.00] & 4.26 [0.83,10.00] & 4.15 [0.00,10.00] & 21.00 [0.74,54.33] \\ 
4 & 12.08 [1.84,26.88] & 11.45 [0.62,26.92] & 12.51 [0.62,26.25] & 12.51 [0.62,26.25] & 13.56 [3.12,29.38] & 12.25 [1.25,26.25] & 27.18 [1.07,70.96] \\ 
5 & 9.74 [0.50,22.50] & 10.06 [1.00,20.50] & 10.38 [1.00,24.00] & 10.38 [1.00,24.00] & 16.33 [4.00,26.00] & 9.92 [1.00,22.50] & 25.26 [0.89,67.67] \\ 
6 & 9.95 [2.08,22.93] & 10.90 [2.07,25.00] & 10.15 [2.50,21.67] & 10.15 [2.50,21.67] & 11.11 [0.00,21.25] & 10.00 [2.08,22.92] & 21.16 [0.80,57.49] \\ 
7 & 15.18 [6.79,21.82] & 14.02 [6.07,25.00] & 15.91 [6.79,23.57] & 15.91 [6.79,23.57] & 15.61 [2.14,23.93] & 15.47 [6.79,23.21] & 20.92 [1.29,57.63] \\ 
8 & 7.89 [0.00,18.12] & 7.54 [1.23,19.38] & 8.23 [0.62,19.69] & 8.23 [0.62,19.69] & 8.64 [0.31,18.12] & 8.01 [0.31,18.12] & 18.85 [0.75,51.12] \\ 
9 & 13.53 [3.89,24.17] & 14.13 [4.17,25.56] & 13.21 [3.33,23.89] & 13.21 [3.33,23.89] & 14.20 [3.61,21.94] & 13.39 [3.89,23.89] & 18.07 [1.00,52.34] \\ 
10 & 8.35 [1.25,17.25] & 9.37 [1.00,18.25] & 8.12 [0.50,17.25] & 8.12 [0.50,17.25] & 6.39 [0.75,14.00] & 8.32 [1.00,17.00] & 16.52 [0.70,44.82] \\ 
11 & 6.34 [0.22,14.55] & 5.26 [0.45,11.59] & 7.38 [0.45,13.41] & 7.38 [0.45,13.41] & 12.26 [2.73,17.95] & 6.83 [0.91,14.32] & 16.63 [0.53,43.67] \\ 
12 & 15.07 [6.01,21.25] & 13.07 [4.58,19.17] & 13.93 [5.60,22.71] & 13.94 [5.60,22.72] & 11.03 [2.71,20.42] & 14.84 [6.65,21.46] & 15.60 [0.95,43.44] \\ 
13 & 14.65 [8.26,20.19] & 11.40 [4.04,17.88] & 14.61 [10.35,21.35] & 14.62 [10.35,21.35] & 14.15 [5.96,20.19] & 14.96 [8.65,20.58] & 16.27 [0.79,44.72] \\ 
14 & 14.41 [9.46,19.64] & 16.66 [10.00,25.89] & 14.81 [9.29,20.91] & 14.82 [9.29,21.61] & 14.28 [7.32,22.68] & 14.53 [8.75,19.29] & 15.85 [0.58,42.72] \\ 
15 & 6.98 [0.50,14.83] & 7.68 [0.67,16.17] & 7.08 [1.83,15.50] & 7.09 [2.16,15.50] & 5.18 [0.00,13.17] & 7.02 [1.00,14.50] & 13.47 [0.29,36.89] \\ 
16 & 10.17 [2.03,17.03] & 10.32 [3.28,17.34] & 10.17 [1.88,17.03] & 10.18 [1.88,17.03] & 12.37 [3.44,18.28] & 10.33 [1.25,16.56] & 15.25 [0.85,40.44] \\ 
17 & 12.98 [5.29,18.68] & 12.09 [3.37,18.38] & 12.30 [5.29,19.27] & 12.31 [5.29,19.27] & 10.89 [2.50,18.53] & 12.83 [5.15,18.68] & 13.87 [0.38,39.30] \\ 
18 & 15.93 [10.69,21.11] & 15.60 [9.03,22.22] & 15.60 [9.58,21.39] & 15.61 [9.58,21.25] & 13.99 [4.31,20.84] & 15.81 [9.86,20.97] & 15.75 [0.56,39.51] \\ 
19 & 15.22 [7.76,21.71] & 13.55 [7.76,19.61] & 15.71 [7.49,23.82] & 15.71 [7.62,23.82] & 16.21 [9.47,21.84] & 15.70 [6.84,21.97] & 14.83 [0.55,40.29] \\ 
20 & 11.17 [6.12,17.88] & 12.64 [6.50,18.75] & 10.54 [5.38,17.75] & 10.54 [5.38,17.75] & 11.50 [2.12,17.62] & 10.95 [5.12,17.75] & 12.88 [0.54,34.25] \\ 
21 & 10.87 [2.86,17.50] & 9.28 [1.31,15.12] & 11.59 [5.00,18.10] & 11.59 [5.00,18.10] & 11.73 [6.06,17.98] & 10.99 [2.50,18.33] & 12.11 [0.58,32.72] \\ 
22 & 8.14 [2.84,12.62] & 8.31 [3.86,13.53] & 8.02 [2.05,13.30] & 8.03 [2.05,13.30] & 8.03 [1.36,13.30] & 8.13 [2.95,13.64] & 11.21 [0.73,30.96] \\ 
23 & 14.91 [9.02,20.11] & 16.21 [11.29,20.76] & 13.72 [7.28,21.63] & 13.72 [7.28,21.63] & 11.06 [4.67,16.85] & 14.66 [9.02,20.87] & 13.08 [0.63,35.23] \\ 
24 & 17.30 [10.00,23.44] & 17.79 [11.35,24.69] & 17.57 [11.15,23.12] & 17.58 [11.15,23.12] & 16.40 [8.23,24.06] & 17.32 [10.10,23.33] & 15.14 [0.70,39.56] \\ 
25 & 9.57 [3.80,16.00] & 9.16 [4.60,15.70] & 9.50 [4.60,15.70] & 9.51 [4.60,15.70] & 10.44 [4.50,15.30] & 9.59 [4.10,15.50] & 11.91 [0.52,33.70] \\ 
26 & 14.14 [7.69,19.52] & 14.91 [8.85,19.42] & 14.88 [5.87,21.25] & 14.89 [5.87,21.25] & 13.04 [5.58,20.29] & 14.51 [7.40,20.38] & 12.53 [0.39,32.96] \\ 
27 & 9.45 [5.83,12.97] & 10.18 [6.30,12.96] & 8.69 [4.17,12.78] & 8.70 [4.17,12.78] & 10.90 [3.15,17.13] & 8.18 [4.35,13.24] & 10.64 [0.31,29.46] \\ 
28 & 11.45 [7.41,16.34] & 11.37 [5.62,15.80] & 12.68 [6.70,18.93] & 12.68 [6.70,18.93] & 8.62 [3.21,16.34] & 12.26 [7.77,17.41] & 11.21 [0.48,30.88] \\ 
29 & 13.67 [9.66,18.53] & 12.95 [8.79,16.55] & 14.85 [9.83,20.09] & 14.85 [9.83,20.09] & 14.40 [7.84,18.62] & 13.98 [9.74,18.02] & 12.20 [0.53,31.21] \\ 
30 & 9.85 [5.00,15.33] & 12.44 [7.42,17.25] & 9.50 [4.08,14.92] & 9.51 [4.08,14.92] & 9.59 [4.58,14.50] & 9.65 [4.49,14.00] & 10.85 [0.66,29.41] \\ 
31 & 12.91 [8.15,16.87] & 12.29 [7.90,15.97] & 10.82 [5.97,16.29] & 10.82 [5.97,16.29] & 8.21 [2.50,14.52] & 12.07 [6.05,16.61] & 10.12 [0.39,27.73] \\ 
32 & 11.61 [7.27,15.16] & 12.50 [8.05,16.56] & 10.86 [7.34,14.61] & 10.86 [7.34,14.61] & 10.33 [4.21,14.77] & 11.50 [7.58,13.98] & 10.27 [0.41,26.97] \\ 
33 & 7.98 [3.79,12.20] & 9.20 [4.55,14.55] & 6.50 [3.11,10.61] & 6.50 [3.11,10.61] & 5.47 [0.76,9.93] & 7.15 [3.26,11.36] & 9.05 [0.34,25.48] \\ 
34 & 9.90 [5.81,13.82] & 10.85 [7.20,15.59] & 10.46 [6.47,13.98] & 10.47 [6.47,14.26] & 9.44 [4.93,12.87] & 9.95 [6.03,13.90] & 10.20 [0.45,27.38] \\ 
35 & 8.24 [4.50,11.71] & 9.87 [5.21,14.00] & 6.97 [3.07,10.71] & 6.97 [3.07,10.71] & 7.02 [2.79,11.71] & 7.86 [3.50,10.93] & 9.83 [0.35,27.80] \\ 
36 & 5.81 [1.67,10.56] & 8.89 [4.58,12.36] & 5.97 [2.01,11.74] & 5.97 [2.01,11.74] & 6.72 [2.08,12.36] & 5.60 [1.32,10.76] & 9.09 [0.31,25.31] \\ 
37 & 9.80 [6.55,12.97] & 10.34 [6.89,14.67] & 10.32 [5.95,14.12] & 10.32 [5.95,14.12] & 7.73 [3.11,12.91] & 10.12 [6.76,13.38] & 9.14 [0.39,25.21] \\ 
38 & 7.42 [2.11,12.83] & 10.90 [4.74,15.72] & 6.66 [1.12,12.90] & 6.66 [1.12,12.90] & 8.22 [3.95,12.44] & 6.81 [2.37,12.04] & 9.64 [0.45,25.95] \\ 
39 & 6.41 [1.22,11.67] & 6.41 [1.03,11.35] & 6.31 [1.47,10.64] & 6.31 [1.47,10.64] & 5.33 [0.64,9.29] & 6.50 [1.09,11.35] & 8.77 [0.32,23.10] \\ 
40 & 6.77 [2.00,11.44] & 7.25 [2.50,13.06] & 6.46 [2.75,10.38] & 6.46 [2.75,10.38] & 4.08 [0.25,8.64] & 6.91 [1.87,10.25] & 8.41 [0.40,23.84] \\ 
41 & 9.86 [5.00,14.21] & 13.37 [9.57,16.46] & 9.27 [5.18,13.41] & 9.27 [5.18,13.41] & 6.85 [2.32,11.71] & 9.89 [4.63,14.88] & 8.73 [0.30,23.70] \\ 
42 & 10.36 [7.14,14.82] & 10.53 [6.43,14.46] & 8.87 [5.42,12.14] & 8.87 [5.42,12.14] & 8.12 [4.35,12.32] & 9.67 [6.25,13.51] & 8.68 [0.39,24.11] \\ 
43 & 9.19 [4.82,13.55] & 9.91 [6.74,13.31] & 9.32 [4.83,13.72] & 9.32 [4.83,13.72] & 8.81 [3.95,13.02] & 9.65 [5.04,13.20] & 9.04 [0.41,24.58] \\ 
44 & 14.83 [11.08,17.61] & 13.81 [9.77,18.07] & 12.92 [9.20,16.93] & 12.92 [9.20,16.93] & 11.73 [7.33,16.43] & 14.12 [10.00,16.99] & 11.46 [0.45,27.87] \\ 
45 & 10.83 [7.00,14.40] & 10.62 [5.83,15.06] & 11.41 [6.50,15.61] & 11.41 [6.50,15.61] & 9.57 [4.94,14.11] & 11.49 [7.78,15.89] & 9.21 [0.52,24.59] \\ 
46 & 9.87 [6.90,13.37] & 10.08 [6.30,13.42] & 10.51 [7.17,14.51] & 10.52 [7.17,14.51] & 9.20 [3.85,13.70] & 10.17 [6.68,13.10] & 9.63 [0.40,25.13] \\ 
47 & 9.75 [4.95,13.72] & 9.22 [6.06,12.66] & 8.49 [4.52,13.40] & 8.49 [4.52,13.41] & 7.02 [1.65,11.70] & 9.12 [4.26,14.52] & 8.35 [0.50,22.93] \\ 
48 & 8.86 [4.64,13.18] & 8.45 [5.62,11.72] & 9.47 [4.74,12.81] & 9.47 [4.74,12.81] & 7.51 [3.49,11.25] & 8.91 [5.31,13.54] & 8.63 [0.37,23.22] \\ 
49 & 9.33 [5.51,13.11] & 8.24 [5.00,11.28] & 7.45 [4.69,11.12] & 7.45 [4.69,11.12] & 7.21 [3.82,10.71] & 8.65 [5.15,11.79] & 8.75 [0.33,23.38] \\ 
50 & 10.19 [7.50,13.30] & 10.06 [6.30,14.75] & 9.13 [5.75,13.90] & 9.13 [5.75,13.90] & 8.00 [4.05,13.10] & 10.04 [7.05,12.80] & 8.80 [0.31,23.04] \\ 
51 & 6.90 [2.40,11.18] & 5.98 [2.55,9.46] & 8.17 [4.16,12.26] & 8.17 [4.16,12.26] & 7.00 [3.43,11.67] & 7.54 [3.19,12.60] & 8.24 [0.29,22.34] \\ 
52 & 10.27 [6.54,14.09] & 10.16 [7.07,13.89] & 10.01 [5.53,13.99] & 10.01 [5.53,13.99] & 9.64 [4.52,13.32] & 9.49 [6.44,13.61] & 9.10 [0.42,24.08] \\ 
53 & 11.37 [7.87,14.48] & 12.33 [8.25,15.90] & 11.49 [6.84,14.72] & 11.49 [6.84,14.72] & 9.09 [4.01,13.58] & 11.52 [7.22,14.76] & 9.41 [0.44,25.15] \\ 
54 & 7.87 [4.12,11.95] & 8.99 [5.46,12.08] & 6.96 [2.77,11.02] & 6.96 [2.77,11.02] & 6.61 [2.36,10.83] & 6.70 [2.92,11.71] & 7.79 [0.34,20.88] \\ 
55 & 10.33 [7.32,13.27] & 9.07 [5.86,12.36] & 8.97 [6.04,12.37] & 8.98 [6.04,12.37] & 6.76 [3.82,10.27] & 9.84 [6.73,13.32] & 7.80 [0.32,21.06] \\ 
56 & 7.35 [3.62,11.03] & 5.84 [2.05,9.60] & 7.65 [3.30,11.25] & 7.65 [3.30,11.25] & 6.93 [2.01,11.56] & 7.23 [3.62,10.80] & 7.54 [0.35,20.22] \\ 
57 & 6.28 [2.15,10.44] & 5.99 [2.45,9.47] & 5.52 [1.49,8.42] & 5.52 [1.49,8.42] & 4.66 [0.53,8.73] & 5.74 [1.67,8.86] & 7.15 [0.34,19.90] \\ 
58 & 8.74 [5.82,11.51] & 8.65 [5.43,11.51] & 7.89 [5.30,11.21] & 7.89 [5.30,11.21] & 7.66 [3.36,11.34] & 8.44 [5.52,10.82] & 7.59 [0.39,21.15] \\ 
59 & 6.75 [2.75,10.42] & 5.74 [2.62,9.07] & 6.02 [2.36,10.51] & 6.02 [2.36,10.51] & 5.49 [1.19,10.60] & 6.39 [2.84,10.59] & 7.86 [0.33,21.16] \\ 
60 & 6.29 [3.46,9.21] & 6.98 [2.92,10.71] & 6.42 [3.17,8.96] & 6.42 [3.17,8.96] & 6.49 [1.91,10.54] & 5.94 [3.17,9.54] & 6.80 [0.20,19.85] \\ 
61 & 9.03 [5.57,11.52] & 8.53 [5.45,11.93] & 7.74 [4.75,10.16] & 7.74 [4.75,10.16] & 6.49 [2.83,10.04] & 7.84 [5.41,11.40] & 7.12 [0.29,20.17] \\ 
62 & 6.69 [2.66,10.44] & 6.78 [2.82,11.66] & 7.63 [3.91,10.89] & 7.63 [3.91,10.89] & 5.84 [1.53,10.81] & 7.17 [3.27,11.01] & 7.33 [0.34,20.45] \\ 
63 & 5.12 [1.94,8.77] & 6.57 [3.17,9.25] & 5.13 [2.46,8.30] & 5.13 [2.46,8.30] & 4.09 [0.28,8.06] & 5.33 [1.94,9.80] & 7.18 [0.22,19.68] \\ 
64 & 8.85 [5.03,11.29] & 8.85 [5.16,12.07] & 8.78 [4.92,11.88] & 8.78 [4.92,11.88] & 6.39 [2.50,9.73] & 8.45 [5.04,11.80] & 7.24 [0.25,20.36] \\ 
65 & 7.55 [4.31,11.04] & 8.98 [6.04,11.92] & 6.64 [3.27,10.19] & 6.64 [3.27,10.19] & 5.64 [1.77,10.23] & 7.09 [3.54,10.54] & 7.04 [0.36,20.32] \\ 
66 & 7.36 [4.51,10.04] & 8.54 [5.08,12.46] & 7.06 [4.39,9.85] & 7.06 [4.39,9.85] & 6.58 [3.45,9.28] & 7.37 [4.13,9.62] & 7.10 [0.23,20.07] \\ 
67 & 5.84 [3.21,8.02] & 7.41 [4.70,9.93] & 5.65 [2.50,8.10] & 5.65 [2.50,8.10] & 4.93 [0.82,8.99] & 6.05 [2.76,8.51] & 6.74 [0.21,18.22] \\ 
68 & 6.57 [4.45,9.49] & 7.78 [4.23,11.18] & 5.53 [2.50,9.26] & 5.53 [2.50,9.26] & 4.16 [0.26,7.87] & 6.67 [4.01,9.49] & 6.71 [0.34,17.90] \\ 
69 & 7.35 [4.20,11.56] & 8.53 [4.71,11.67] & 7.54 [4.78,10.62] & 7.54 [4.78,10.62] & 6.47 [2.50,10.47] & 7.61 [4.82,10.88] & 6.89 [0.21,19.35] \\ 
70 & 6.44 [3.50,9.61] & 4.59 [0.93,7.79] & 7.36 [3.46,10.64] & 7.36 [3.46,10.64] & 6.77 [3.00,10.39] & 6.90 [3.61,11.22] & 7.28 [0.30,19.11] \\ 
71 & 7.16 [4.26,9.86] & 4.10 [1.34,6.59] & 7.00 [4.08,10.21] & 7.00 [4.08,10.21] & 6.03 [2.46,9.23] & 6.97 [4.23,10.18] & 6.77 [0.24,18.51] \\ 
72 & 5.29 [2.40,8.65] & 5.18 [2.22,7.68] & 4.37 [1.63,7.22] & 4.37 [1.63,7.22] & 4.38 [0.76,8.12] & 5.28 [2.05,8.58] & 6.42 [0.29,17.73] \\ 
73 & 7.11 [3.59,10.51] & 6.90 [4.04,10.27] & 7.11 [3.84,10.27] & 7.11 [3.84,10.27] & 5.67 [1.16,9.11] & 7.16 [3.83,10.27] & 6.81 [0.26,19.34] \\ 
74 & 5.74 [3.01,9.05] & 6.79 [3.55,8.99] & 5.66 [2.50,8.24] & 5.66 [2.50,8.24] & 7.05 [2.94,10.41] & 5.66 [2.73,8.86] & 6.80 [0.29,19.09] \\ 
75 & 5.25 [2.23,8.20] & 5.62 [2.93,8.17] & 4.63 [1.00,7.94] & 4.63 [1.00,7.94] & 4.77 [1.00,7.23] & 5.66 [1.73,8.43] & 6.23 [0.25,17.98] \\ 
76 & 5.87 [2.47,9.18] & 6.25 [3.45,8.95] & 4.79 [1.87,8.16] & 4.80 [1.87,8.16] & 3.50 [0.99,7.17] & 5.88 [2.70,8.98] & 6.12 [0.22,17.48] \\ 
77 & 4.10 [0.32,7.24] & 4.67 [1.46,7.92] & 1.92 [0.10,5.07] & 1.92 [0.10,5.07] & 1.86 [0.00,4.74] & 4.12 [0.97,6.63] & 6.00 [0.19,17.90] \\
\bottomrule
\end{longtable} 
\end{small}

\end{document}